\theoremstyle{definition}
\newtheorem{theorem}{Theorem}
\def\x{\mathbf{x}}
\def\1{\mathbf{1}}
\def\0{\mathbf{0}}
\def\optlimits{\nolimits}
\begin{document}
\title{An ILP Solver for Multi-label MRFs with Connectivity Constraints}

\author[1]{Ruobing Shen%
  \thanks{Electronic address: \texttt{ruobing.shen@informatik.uni-heidelberg.de}}}
\affil[1]{Institue of Computer Science, Heidelberg University, Germany}

\author{Eric Kendinibilir}
\affil{Institute of Mathematics, Heidelberg University, Germany}
\author{Ismail Ben Ayed}
\affil{LIVIA Laboratory, École de technologie supérieure (ETS), Montreal, QC, Canada}

\author{Andrea Lodi%
  }
\affil{Department of Mathematical and Industrial Engineering, Polytechnique Montréal, Canada}
\author{Andrea Tramontani%
  }
\affil{CPLEX Optimization, IBM, Italy}
\author[1]{Gerhard Reinelt%
  }


\maketitle

\begin{abstract}
Integer Linear Programming (ILP) formulations of Markov random fields (MRFs) models with global connectivity priors were investigated previously in computer vision, e.g., \cite{globalinter,globalconn}. In these works, only Linear Programing (LP) relaxations \cite{globalinter,globalconn} or simplified versions \cite{graphcutbase} of the problem were solved. This paper investigates the ILP of multi-label MRF with exact connectivity priors via a branch-and-cut method, which provably finds globally optimal solutions. The method enforces connectivity priors iteratively by a cutting plane method, and provides feasible solutions with a guarantee on sub-optimality even if we terminate it earlier. The proposed ILP can be applied as a post-processing method on top of any existing multi-label segmentation approach. As it provides globally optimal solution, it can be used off-line to generate ground-truth labeling, which serves as quality check for any fast on-line algorithm. Furthermore, it can be used to generate ground-truth proposals for weakly supervised segmentation. 

We demonstrate the power and usefulness of our model by several experiments on the BSDS500 and PASCAL image dataset, as well as on medical images with trained probability maps. 
\end{abstract}

\section{Introduction}
\label{sec:intro}

Most early vision problems can be formulated using Markov Random Fields (MRFs), hence its solution algorithms are of pivotal importance in computer vision. The MAP-MRF (maximizing a posteriori in an MRF) has proven to be successful for many computer vision problems such as image segmentation, denoising and stereo, among others. We refer to \cite{mrf-d,mrf,mrf-s} for an overview of MRF optimization techniques and applications in vision.

In the standard case of MRF with pairwise potentials, we have an undirected graph $G=(V,E)$, where~$V$ represents a set of pixels (or superpixels) from an input image, and~$E$ denotes a set of edges consisting of unordered pairs of nodes indicating adjacency relations. We consider the problem of 
minimizing the following energy function:
\begin{equation}
E(x)=\sum\optlimits_{p\in V}\theta_p(x_p)+\sum\optlimits_{(p,q)\in E}\theta_{pq}(x_p,x_q).
\label{energy}
\end{equation}

Here, we use $x_p$ to denote the label of node $p\in V$, which belongs to a pre-defined finite set $\mathcal{L}=[k]$ representing $k$ classes, where $[k]=\{1,\ldots,k\}$. $\theta_p(x_p)$ is usually called unary potential, and is derived from the observed data. It measures how well label~$x_p$ fits node~$p$. $V_{pq}(x_p,x_q)$ is often referred to as pairwise potential. It measures the cost of assigning labels $x_p,x_q$ to adjacent nodes $p,q$. Typically, it is used to impose spatial smoothness or to align the solution boundaries to image edges.
The goal is to find a labeling~$\x$ (i.e., a mapping from~$V$ to $\mathcal{L}$) that minimizes~$E(x)$. The Potts function~$\theta(\alpha,\beta)=\lambda\cdot \mathds{1}(\alpha\neq\beta)$, where $\lambda$ is a constant, and $\mathds{1}(\cdot)$ is $1$ if its argument is true and $0$ otherwise, is widely used, among many other functionals.


Minimizing energy \eqref{energy} is a difficult problem ($\mathcal{NP}$-hard in general). In the case of an undirected graph, and by introducing binary variables $x_i^\ell$, $i\in V$, $\ell\in \mathcal{L}$, which indicate whether node $i$ is assigned label $\ell$ ($x_i^\ell=1$ in this case), the corresponding ILP formulation with Potts function boils down to:

\begin{alignat}{2}
\text{min}_{x}\;\;(1-\lambda)\sum_{\ell=1}^k&\sum_{i=1}^n c_i^\ell x_{i}^\ell +\lambda\sum_{\ell=1}^k\sum_{(i,j)\in E}|x_i^\ell-x_j^\ell|\label{ILP}\\
\sum\optlimits_{\ell=1}^kx_{i}^\ell&=1, \;\;\forall i\in [n], \label{ILP1}\tag{\ref{ILP}a} \\
x_{i}^\ell&\in\{0,1\}, \;\;\forall i\in [n], \;\; \ell\in [k], \tag{\ref{ILP}b}
\end{alignat}
where $c_i^\ell$ denotes the unary data term for label~$\ell$ and node~$i$, and~$\lambda\in[0,1]$ is a positive parameter weighting the contribution of the smoothness term. 
Constraint~\eqref{ILP1} enforces that each node is assigned exactly one label.

Since~\eqref{ILP} is $\mathcal{NP}$-hard and difficult to solve to optimality, it is common in vision to solve the corresponding LP relaxation~\cite{pd2,fastpd}. There have been works on solving approximations of~\eqref{ILP}, for instance, message passing algorithms~\cite{tree,bp} and $\alpha$-expansion~\cite{graphcut} with guaranteed approximation ratios. The corresponding condition for $\alpha$-expansion is nonnegative edge weights and $V_{pq}(\beta,\gamma)+V_{pq}(\alpha,\alpha)\leq V_{pq}(\beta,\alpha)+V_{pq}(\alpha,\gamma)$, for all labels $\alpha, \beta, \gamma\in \mathcal{L}$. On the other hand, it is also important to solve~\eqref{ILP} to optimality (even off-line), thus providing ground-truth benchmarks for those fast approximate algorithms.

The standard model in \eqref{ILP}, which combines unary and pairwise potentials, can impose only a limited class of constraints on the solution. Therefore, there is an ongoing research effort in computer vision towards embedding high-order constraints in MRFs. These includes, for instance, region connectivity \cite{globalinter,globalconn,graphcutbase}, shape convexity \cite{Gorelick2017}, curvature regularization \cite{Nieuwenhuis2014} and shape compactness \cite{Dolz2017}, among other high-order priors. In this paper, we investigate exact region connectedness priors. More precisely, we are interested in solving~\eqref{ILP} to global optimality, while adding a global (high-order) potential function to~\eqref{ILP} to explicitly enforce the connectivity of each label (to be made more precise in Sec.~\ref{connect}). 
A \emph{$k$-label partitioning} of the image in this paper is a partition of $G$ into connected subgraphs $\{G_1, G_2, \ldots, G_k\}$ such that
$\cup_{i=1}^kG_i=G$, and $G_i \cap G_j=\emptyset$, $i\neq j$. Without loss of generality, we assume that segment (subgraph) $G_i$ is assigned the label $i$.
Enforcing the connectivity potential itself is proven to be $\mathcal{NP}$-hard in~\cite{graphcutbase}.



\subsection{Related Works}

Image segmentation under approximate connectivity constraints has been considered in~\cite{graphcutbase}, where a binary MRF is solved. Exact connectivity is not considered in~\cite{graphcutbase}. Instead, a simplified version of the problem is proposed, where only a given (user-provided) pair of nodes must be connected. Following this assumption, the problem is solved with a  heuristic-based graph cut algorithm \cite{experimental}, obtaining connected foreground (binary segmentation). 

Exact global connectivity potentials are formulated as an ILP in \cite{globalconn}, where connected subgraph polytopes are introduced. Due to the high computational cost of solving the corresponding $\mathcal{NP}$-hard problem, the work in \cite{globalconn} examined only LP relaxations of the ensuing ILP. Although the general formulation works for multi-label MRFs, the authors applied it only to binary MRF problems. 
  In \cite{minicost}, the authors optimized exactly a linear (unary-potential) objective subject to connectivity constraint in a binary segmentation problem. It solves two instances of medical benchmark datasets to optimality for the first time. However, the model does not apply to the general multi-label pairwise MRF objective in~\eqref{energy}, which is of wide interest in vision applications.
Finally, it is worth mentioning that the subgraph connectivity problem also plays an important role in the  operations research community, and has been applied, for instance, to the forest planning problem \cite{imposeconn}, where each subregion of the forest is constrained to be connected.


\subsection{Contribution}

This paper investigates multi-label MRFs with exact connectivity constraints. To solve the ensuing ILP problem, we propose a branch-and-cut method, which provably finds globally optimal solutions. The method could provide feasible solutions with a guarantee on suboptimality even if we terminate it earlier. 
Unlike \cite{globalinter,globalconn}, which examines LP relaxations of the initial ILP, our method provides global optimality guarantee. Different from \cite{graphcutbase}, we consider exact connectivity and we do not reduce the problem to connectivity between a given pair of points. The proposed ILP is quite general, and can be applied as a post-processing method on top of any existing multi-label segmentation approach. As it provides globally optimal solution, it can be used off-line to generate ground-truth labeling, which serves as quality check for any fast on-line algorithm. Furthermore, it can be used to generate ground-truth proposals for state-of-the-art weakly supervised semantic segmentation techniques, e.g., those based on partial scribble-based annotations \cite{ScribbleSup}.  




\section{Connected Subgraph Polytopes}
\label{connect}
In this section, we introduce the convex hull of the set of all connected subgraphs, where a connected subgraph consists of nodes with the same label that are connected. We call a node $i\in V$ \emph{active} if $x_i=1$, e.g., if it is labeled as foreground.

\textbf{Connected Subgraph Polytope}.
Given a connected, undirected graph $G=(V,E)$, let $C=\{\x: G'(V',E') \;\text{connected}\}$, where $V' =\{i\in V: x_i=1\}$ and $E'=\{(i,j) \in E:i,j \in V'\}$. Recall that a subgraph $G'(V',E')$ is connected if $\forall i,j\in V'$, $\exists $ a path in $G'$ that connects $i$ and~$j$. Then, $C$ denotes the finite set of connected subgraphs of $G$, and we call the convex hull of $C$ the connected subgraph polytope of $G$, denoted by $conv(C)$.  It was proven in \cite{karp2002} that optimizing a linear
function over $conv(C)$ is $\mathcal{NP}$-hard.


\textbf{Vertex-Separator Set}.
Given an undirected graph $G=(V,E)$, for any pair of active nodes $i,j \in V,\; i\neq j,\; (i,j)\notin E$, the set $S\subseteq V\setminus\{i,j\}$ is called a vertex-separator set with respect to $\{i,j\}$ if the removal of $S$ from $G$ disconnects $i$ and $j$ in $G$. 
As an additional definition, a set $\bar{S}$ is said to be a \emph{minimal vertex-separator set} if it is a  vertex-separator set with respect to a node pair $\{i,j\}$ in~$G$ while any strict subset $T\subset \bar{S}$ is not.

Let $\mathcal{S}(i,j)= \{S\subset V : S\; \text{is a vertex-separator with respect to}~\{i,j\}\}$ be the collection of all $\{i,j\}$ vertex-separator sets in $G$, and $\mathcal{\bar{S}}(i,j) \subset \mathcal{S}(i,j) $ be the subsets of minimal vertex-separator sets.

Following \cite{globalconn}, we can describe $C$ with the  class of linear inequalities
\begin{alignat}{2}
x_i + x_j -1 \leq \sum\optlimits_{s\in {S}} x_s,\;\;
\forall i,j\in V: (i,j)\notin E,\; \forall S\in \mathcal{S}(i,j), \label{seperator}
\end{alignat}
where $x_i\in\{0,1\}$, $i\in V$. Precisely, if two nodes $i$ and $j$ are active (left hand side of \eqref{seperator} becomes $1$), they are not allowed to be separated by any set of inactive nodes of $S$ (at least one node in any $S\in \mathcal{S}(i,j)$ must be active).

The convex hull of a finite set is the tightest possible convex relaxation, and facet-defining inequalities are true facets of the convex hull. In \cite{globalconn}, the authors prove inequalities~\eqref{seperator} are facet-defining for $conv(C)$ if $\mathcal{S}(i,j)$ is replaced by $\mathcal{\bar{S}}(i,j)$. However, the number of such constraints is exponential in $|V|$.

\textbf{Rooted case}.
In this paper, we require the user to input a scribble for each label, so that at least one node is identified within each label. Let $r$ denote the root node for the label (we use the first node of the scribble). Then, it suffices to check connectivity of every active node to the root node instead of all pairs of active nodes. Thus, constraints \eqref{seperator} become

\begin{equation}
x_i \leq \sum\optlimits_{s\in {S}} x_s, \;\;\forall  i\in V: (i,r)\notin E, \; \forall S\in \mathcal{S}(i,r). \label{root}
\end{equation}

\begin{theorem}
It is still $\mathcal{NP}$-hard to optimize over the connected subgraph polytope $\mathcal{C}$ even if one root node $r$ is given.
\end{theorem}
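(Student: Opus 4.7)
The plan is to polynomially reduce the unrooted connected-subgraph optimization problem, which is $\mathcal{NP}$-hard by \cite{karp2002}, to the rooted version by enumerating over the choice of root. Let $\mathcal{P}$ denote the problem of minimizing a linear function $\sum_{i\in V} c_i x_i$ over $conv(C)$, and let $\mathcal{P}_r$ denote the analogous problem on $conv(\{x \in C : x_r = 1\})$, i.e.\ the problem where the chosen subgraph is additionally required to contain the prescribed root $r$.

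The key observation is that every nonempty connected subgraph of $G$ contains at least one vertex, so the optimum of $\mathcal{P}$ restricted to nonempty solutions equals $\min_{r \in V}\mathrm{OPT}(\mathcal{P}_r)$. The overall optimum of $\mathcal{P}$ is then recovered by comparing this value with the empty-solution objective $0$. Given a hypothetical polynomial-time algorithm $A$ for $\mathcal{P}_r$, one would therefore solve $\mathcal{P}$ by invoking $A$ exactly $|V|$ times, once per choice of root, and returning the best value against $0$. This would yield a polynomial-time algorithm for $\mathcal{P}$, contradicting \cite{karp2002}, so $\mathcal{P}_r$ must be $\mathcal{NP}$-hard.

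The main point I expect to verify carefully is that the rooted formulation genuinely forces the root into the selected subgraph. The separator inequalities \eqref{root} alone only encode the implication ``if a non-root node is active, then every $\{i,r\}$-separator contains an active node,'' and do not by themselves demand $x_r = 1$; the setting $x \equiv 0$ is vacuously feasible. This gap is closed by the user-provided scribble, which adds the equality $x_r = 1$. With that equality the feasible set is exactly $\{x \in C : x_r = 1\}$, and the enumeration argument above goes through. An alternative direct reduction from node-weighted Steiner tree with one designated root terminal (forcing the remaining terminals into the optimum via sufficiently attractive weights) would also work, but the root-enumeration route is cleaner and requires no weight manipulation.
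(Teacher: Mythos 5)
Your proposal is correct and uses essentially the same argument as the paper's own proof in the appendix: a Turing reduction from the (unrooted) $\mathcal{NP}$-hard connected-subgraph optimization problem by enumerating all $n=|V|$ choices of root, solving each rooted instance with the hypothetical polynomial algorithm, and taking the best value. Your additional care about the empty solution and about the root actually being forced active (via the $x_r=1$ constraint) tightens details the paper glosses over, but the route is the same.
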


\begin{proof}
The proof can be found in the supplementary materials.
\end{proof}

In practice, the number of constraints \eqref{root} is still exponential in $|V|$ (as the number of vertex-separator set is), hence they cannot be considered all simultaneously for graphs of large sizes. 
However, given a labeling $\x$, we can identify a subset of violated connectivity constraints of type~\eqref{root} in polynomial time and iteratively add them to the ILP while searching 
for new integer solutions. This is known as the \emph{cut generation} approach. We will look into this in detail in Sec.~\ref{cutgen}.

\section{MRFs with Connectivity Constraints}
\label{formula}

\subsection{Proposed model: ILP-PC}

Let $f_i$ denotes the observed image feature (e.g., color) at spatial location $i$. 
We assume the user inputs $k$ scribbles as seeds for the $k$ labels, as shown in the left image of Fig.~\ref{fish}. 
Assuming image observations follow a piecewise constant model within each region\footnote{We assume a piecewise constant model for simplicity. However, our formulation extends to any other probabilistic assumptions of observation models.}, let $Y_\ell$ denotes the image average of seeds within label $\ell$. In this case, unary potential $c_i^\ell=|f_i-Y_\ell|$ evaluates how well label $\ell$ fits node $i$.

Let $C_\ell$ denotes the connected subgraph polytope of label $\ell$, and $\x^\ell :=(x_1^\ell, \ldots, x_n^\ell)$. By introducing two nonnegative variables $\varepsilon_i^{\ell+}$ and $\varepsilon_i^{\ell-}$ to model $|x_i^\ell-x_j^\ell|$, the ILP of our multi-label MRF with connectivity constraints becomes:
\begin{alignat}{2}
\min_x\;(1-\lambda)&\sum_{\ell=1}^k\sum_{i=1}^n c_i^\ell x_{i}^\ell +\lambda\sum_{\ell=1}^k\sum_{(i,j)\in E}(\varepsilon_i^{\ell+} +\varepsilon_i^{\ell-})\label{ILP1d}\\
\sum\optlimits_{\ell=1}^kx_{i}^\ell&=1, \;\;\forall i\in [n],\; \ell\in [k], \tag{\ref{ILP1d}a} \\
x_i^\ell-x_j^\ell &= \varepsilon_i^{\ell+} -\varepsilon_i^{\ell-},\;\;\forall i\in [n],\; \ell\in [k], \tag{\ref{ILP1d}b}\label{ILP1db}\\
x_{i}^\ell&\in\{0,1\}, \;\;\forall i\in [n], \; \ell\in [k], \tag{\ref{ILP1d}c}\label{ILP1dseed}\\
\x^\ell &\in C_\ell,\;\;\forall \ell\in [k],\tag{\ref{ILP1d}d}\label{ILPconn}\\
\varepsilon_i^{\ell+}, \varepsilon_i^{\ell-} &\geq 0,\;\;\forall i\in [n], \; \ell\in [k],\tag{\ref{ILP1d}e}\\
x_{i}^\ell&=1, \;\;\forall i \text{ within the scribble of label $\ell$}, \tag{\ref{ILP1d}f}\label{ILP1dseed2}
\end{alignat}
where constraints \eqref{ILPconn} can be expressed as the rooted vertex-separator constraints~\eqref{root}. The transformation holds because~\eqref{ILP1d} is a minimization problem, and $|x_i^\ell-x_j^\ell| = 1$ will only induce $\varepsilon_i^{\ell+} = 1$ and $\varepsilon_i^{\ell-} = 0$.

In the case of a superpixel graph, where a superpixel contains similar pixels in terms of color or texture, we represent relations between neighboring superpixels by defining the corresponding \emph{Region Adjacency Graph} (RAG) $G=(V,E)$, where $E$ contains edges between pairs of adjacent superpixels. 
We multiply the unary data term by $\sigma_i$ and the pairwise term by $\gamma_{ij}$. Here, $\sigma_i$ denotes the number of pixels contained in node (superpixel) $i$, and $\gamma_{ij}$ represents the number of neighboring pixels between node $i$ and $j$.

\subsection{ILP-PCB: ILP-PC with background label}
If a clear background (not necessarily connected) exists in the given image, the connectivity constraints can be ignored on the specific label, which we call the background label. 
This is a reasonable assumption in many cases, such as the black-region background in the left image of Fig.~\ref{swan}. In this example, the background has $4$ disconnected components. 
Fig.~\ref{swan} depicts results with and without background label.

\begin{figure}
\centering
\includegraphics[width=0.5\columnwidth]{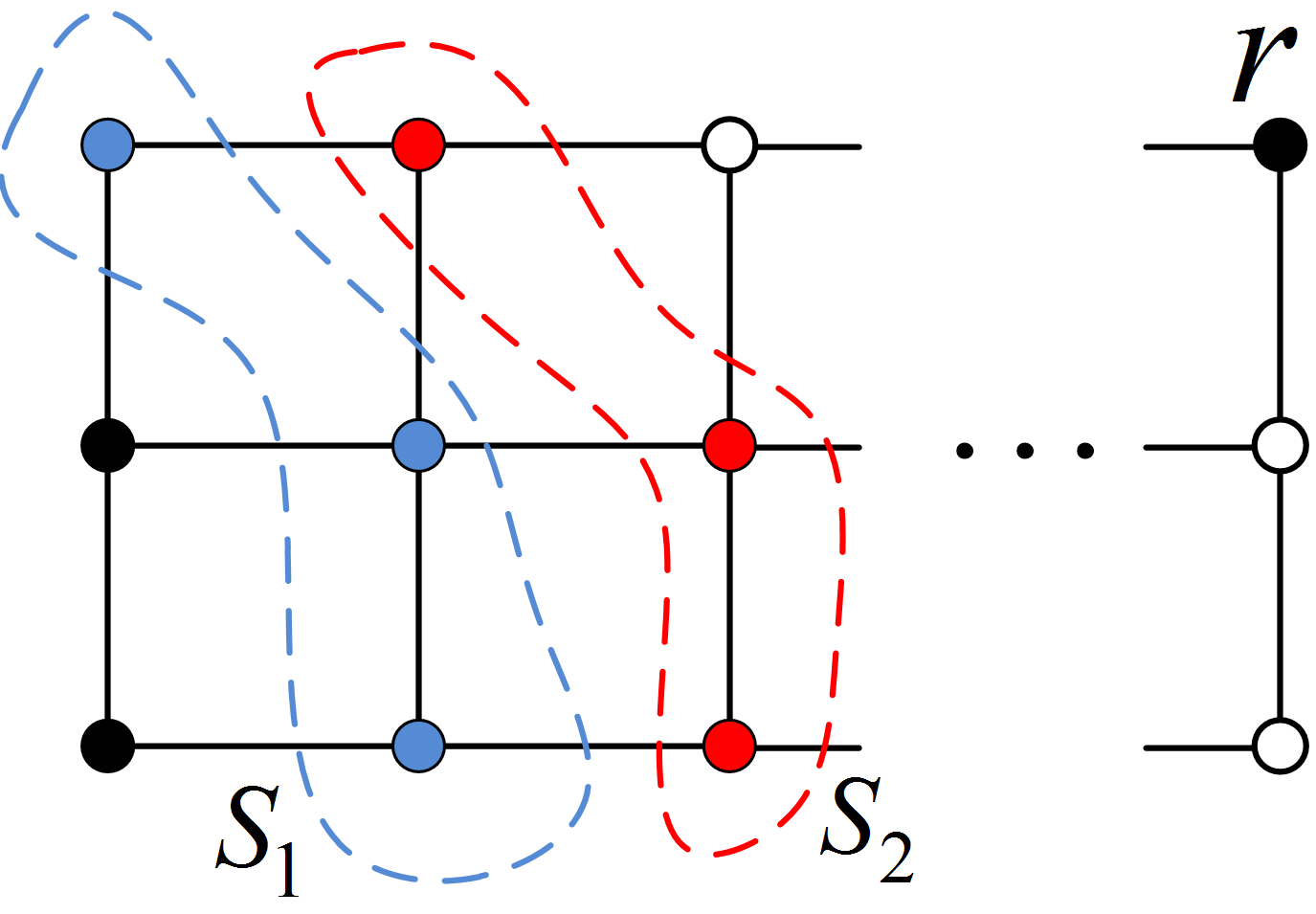}
\caption{\textit{$K$-Nearest cut generation strategy}. Active nodes are shown in black, and the two separator sets are marked in red and blue.}
\label{Cutting plane}
\end{figure}
\section{Solution Techniques}
\subsection{Towards global optima: the branch-and-cut method}
\label{cutgen}
The most widely used exact method for solving an ILP is \emph{branch and cut}, namely a sophisticated combination of the \emph{branch and bound} and the \emph{cutting plane} algorithms \cite{Lodi2010}. In this section, we focus on the cutting plane method because the branch-and-bound method is implemented by default in any modern ILP solver. We recall that the ILP gap of a minimization ILP problem is computed as $(I-LP) / I$, where $I$ denotes the best integer solution and $LP$ the LP-relaxation value. The ILP solver terminates if the ILP gap is very small, or if the time limit is reached.

We are interested in exact connectivity and we focus on the rooted case \eqref{root}.
We concentrate on enforcing the connectivity constraints for one label only (e.g., $\ell$). Then, the same approach will be repeated for other labels until they are all connected (in the case of a background label, we ignore its connectivity). 

The basic idea is to omit~\eqref{ILPconn} initially, explore the branch-and-bound tree of system~\eqref{ILP1d} until an integer solution is found and then check the feasibility of this solution (i.e., connectivity). If infeasible, violated constraints~\eqref{ILPconn} are identified (by solving the so-called the \emph{separation problem}) and added to~\eqref{ILP1d} to cut off the infeasible solution. This procedure is iterated until~$G_\ell$ is connected.

We treat individual connected components (see Fig.~\ref{Cutting plane}) as one entity, since establishing connectivity between all nodes in this component and $r$ automatically connects all the nodes. Then, the separation problem boils down to finding a vertex-separator set $S$ between each disconnected active component and the root component (containing $r$) in the current solution.

At the heart of the above technique is that only a subset of connectivity constraints~\eqref{ILPconn} will be active at the optimum of~\eqref{ILP1d}, i.e., polynomially many should be enough for the algorithm to converge to the optimal solution. However, depending on the choice of the inequalities at each step, we may require a different number of such cutting planes and the number of iterations varies. 

\textbf{The separation problem and cutting plane selection}.
Among the many ways of separating and selecting the violated constraints \eqref{ILPconn}, we choose the so-called $K$-Nearest strategy. Note that, although the separation problem for constraints \eqref{ILPconn} is  generally $\mathcal{NP}$-hard (see Sec. \ref{connect}), it is polynomial to separate integer infeasible solutions, for example, by means of the following algorithm.  
%
%
Precisely, we run a breath-first search for any active component $H$ to collect the $K$ (disjoint) vertex separator sets~$S_m$ ($m=1, \dots, K$) composed of all nodes with identical distance. 
The search terminates if $K$ equals the number of nodes in $H$ or if another active node is reached. The idea is illustrated in Fig.~\ref{Cutting plane}, where active nodes are shown in black and $r$ denotes the root node. The two separator sets are marked in red and blue. Here $K=2$, because it reaches the number of nodes in~$H$. 

The $K$-Nearest strategy is reported in~\cite{minicost} to be one of the most successful (among five) in terms of solved instances and computational efficiency. We will adopt this vertex separation strategy in Sec.~\ref{experiments}. 

\subsection{$L_0$-$H$: a region fusion based heuristic}
To improve the efficiency of the ILP solver while solving problem \eqref{ILP1d} with the method described in the previous section, we calculate an initial feasible solution with a heuristic, called $L_0$-$H$, which will be used as an upper bound of 
the branch-and-bound tree. On the one hand, this helps to prune a lot of unnecessary branching nodes. On the other hand, the solver can provide an optimality gap to the initial solution, by solving the LP relaxation of the ILP, which will serve as a lower bound to the problem.

We adopted the idea for the heuristic from~\cite{Fast}, which is basically a local greedy algorithm to solve the discrete Potts model~\cite{potts}. It works by 
iteratively merging groups of nodes. 

In the beginning, each scribble of 
nodes (superpixels) and every node not covered by any scribbles are in their own groups. Then 
for every iteration,
we merge two neighboring groups, if the following condition holds and the merging 
does not result in the nodes of two different scribbles being in the same group: 
\begin{equation}
\label{lzero}
 \sigma_i\cdot \sigma_j\cdot|Y_i-Y_j| \leq \beta\cdot \gamma_{ij}\cdot(\sigma_i+\sigma_j).
\end{equation}
where $\sigma_i$ denotes the number of pixels in group (segments) $i$, $\gamma_{i_j}$ denotes the number of neighboring pixels (boundary length) of two groups $i$ and $j$, and $Y_i$ the mean of image data (e.g., color) within group $i$.
By increasing parameter~$\beta$ in \eqref{lzero} in every iteration, we terminate the algorithm when exactly $k$ 
groups remain.
It is shown in~\cite{Fast} that the following exponentially growing strategy of $ \beta $ gives the best results.
$$ \beta = (\frac{\text{iter}}{100})^{2.2}*\eta$$
where iter is the current iteration number, and $\eta$ is the regularization parameter for the Potts model.
We will show in Sec.~\ref{experiments} that $L_0$-$H$ is fast and generates good results most of the time, sometimes even optimal.

\section{Experiments}
\label{experiments}
In this paper, all computational experiments were performed using Cplex $12.7.0$, on a Intel i5-4570 quad-core machine, with $16$ GB RAM. 
We show experiments on medical images, where the unary potentials are based on the probability maps of given labels, which were trained using 
convolutional neural networks (CNN) \cite{Dolz2017a}. The sizes range from $96\times96$ to $256\times256$. 

We further use the Berkeley Segmentation Dataset~\cite{amfm_pami2011} (BSDS500, image size $321\times 481$) and the PASCAL VOC 2012 set~\cite{parscal2012} (PASCAL, image size around $500\times 400$). 
We first apply the SLIC~\cite{slic} superpixel algorithm to get an over-segmentation, with the number of superpixel around~$1000$. 

Using superpixels has several advantages. First, the complexity of the optimization problem is drastically reduced with only a negligible segmentation error. Second, the information in each superpixel is more discriminative, and also overcomes the case of outliers. 
As shown in a recent superpixel algorithms survey paper~\cite{stutz}, a few advanced superpixel algorithms can achieve very accurate over-segmentation results with around $1000$ suerpixels.


We conduct a comprehensive comparison of the following different optimization models:

\begin{itemize}

\item ILP-PC. Our proposed ILP formulation~\eqref{ILP1d} of multi-label MRF, under the global connectivity constraints.

\item LP-PC: The LP relaxation of ILP-PC, which was introduced in ~\cite{globalconn,minicost}.

\item $L_0$-$H$: Our proposed $L_0$ region fusion based heuristic, which was motivated by ~\cite{Fast} and modified to generate exactly $k$ connected segments.

\item ILP-P: The ILP formulation of~\eqref{ILP1d} without connectivity constraints~\eqref{ILPconn}, which is widely used in vision (e.g., graph cuts).

\item ILP-PCB: ILP-PC with the ``background'' label marked by the user, where this special label is not required to be connected.

\end{itemize}

In this section, if there is no further explanation, the default setting for the pairwise potential $\lambda$ is $0.2$, $100$~secs for the time limit, and $0.1$ for the $L_0$-$H$ parameter~$\eta$. We adopt $L_0$-$H$ to provide initial solution for the ILP solver. When we report energy $E$, we use the objective function in \eqref{ILP1d}.

\begin{figure}[t!]
    \centering
    \begin{subfigure}{0.23\columnwidth}
                    \includegraphics[width=.99\columnwidth]{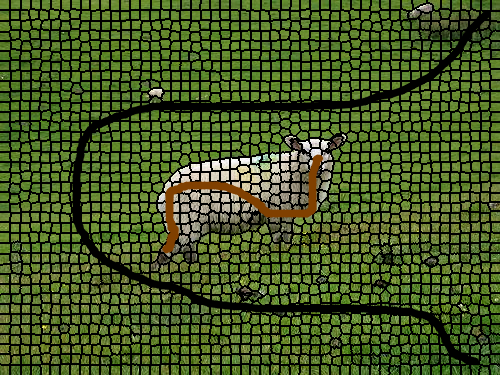}
    \end{subfigure}
    \begin{subfigure}{0.23\columnwidth}
            \includegraphics[width=.99\columnwidth]{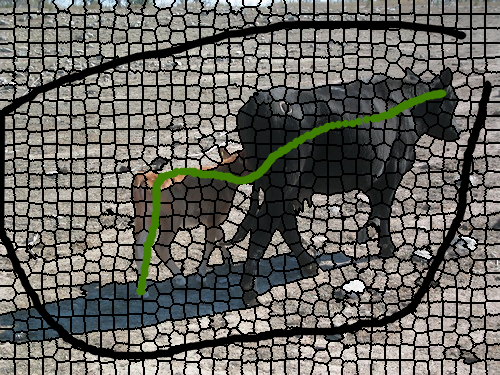}     
    \end{subfigure}       
    \begin{subfigure}{0.23\columnwidth}
            \includegraphics[width=.99\columnwidth]{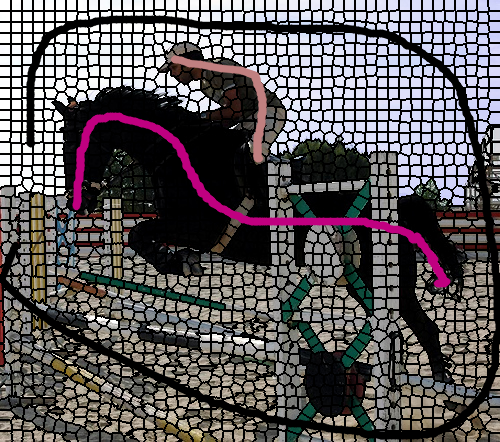}
    \end{subfigure}    \\    
    \begin{subfigure}{0.23\columnwidth}
            \includegraphics[width=.99\columnwidth]{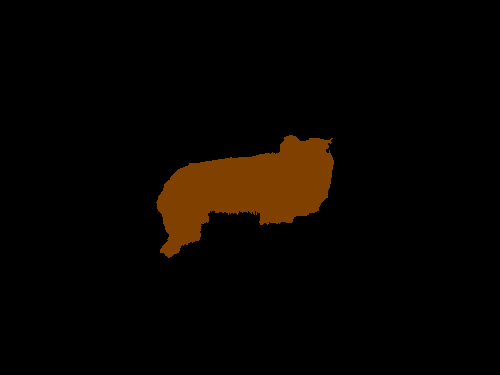}
    \end{subfigure}
    \begin{subfigure}{0.23\columnwidth}
            \includegraphics[width=.99\columnwidth]{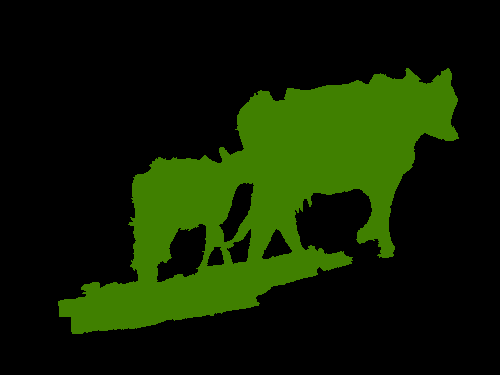}
    \end{subfigure}        
    \begin{subfigure}{0.23\columnwidth}
            \includegraphics[width=.99\columnwidth]{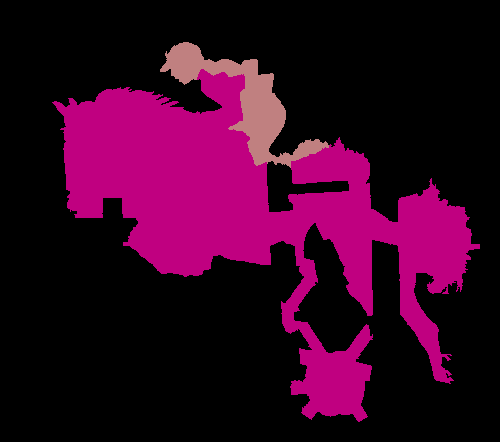}
    \end{subfigure}
     \caption{Ground-truth generation on $3$ images taken from PASCAL. $\lambda = 0.3$ for the third image, $t=0.08$, $0.15$ and $31.3$ secs, $E=5927.4$, $12220.1$ and $28238.5$.}
     \label{pascal_test}
\end{figure}

\subsection{Ground-truth generation}
Our proposed ILP solver is $\mathcal{NP}$-hard, but provides global optimal solution for the multi-label MRFs with connectivity prior. Thus, it could be used off-line to generate ground-truth labeling, which serves as quality check for any fast on-line algorithms.

We conduct experiments on two instances taken from PASCAL, where the scribbles of all $11k$ training images are online available and provided by ScribbleSup~\cite{ScribbleSup}. We set $\lambda$ equals $0.2$ for the first two image and $0.3$ for the third. ILP-PC takes only $0.08$ and $0.15$ secs on the first two instances and $31.3$ secs on the third to get the optimal solution. The optimal energy are $5927.4$, $12220.1$ and $28238.5$ respectively.

\subsection{Detailed comparison of different models}
\subsubsection{Medical images with probability maps}
\label{medical}

\begin{figure}[t!]
    \centering
    \begin{subfigure}{0.23\columnwidth}
                    \includegraphics[width=.99\columnwidth]{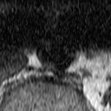}
                	\caption{Medical image}
    \end{subfigure}%
    \begin{subfigure}{0.23\columnwidth}
            \includegraphics[width=.99\columnwidth]{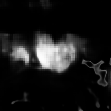}
            \caption{Probability map }      
    \end{subfigure}  
    \begin{subfigure}{0.23\columnwidth}
            \includegraphics[width=.99\columnwidth]{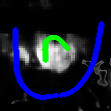}
            \caption{User scribbles}      
    \end{subfigure}\\
    \begin{subfigure}{0.23\columnwidth}
            \includegraphics[width=.99\columnwidth]{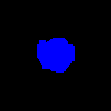}
            \caption{$L_0$-$H$, 0.83s. }      
    \end{subfigure}          
     \begin{subfigure}{0.23\columnwidth}
                    \includegraphics[width=.99\columnwidth]{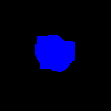}
                \caption{ILP-PC, 100s.}
    \end{subfigure}%
    \begin{subfigure}{0.23\columnwidth}
            \includegraphics[width=.99\columnwidth]{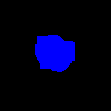}
            \caption{ILP-PCB, 100s. }      
    \end{subfigure}   \\   
     \begin{subfigure}{0.23\columnwidth}
                    \includegraphics[width=.99\columnwidth]{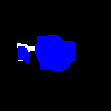}
                \caption{LP-PC, 1.19s. }
    \end{subfigure}%
    \begin{subfigure}{0.23\columnwidth}
            \includegraphics[width=.99\columnwidth]{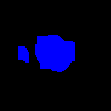}
            \caption{ILP-P, 0.46s. }      
    \end{subfigure}
    \begin{subfigure}{0.23\columnwidth}
            \includegraphics[width=.99\columnwidth]{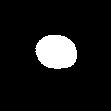}
            \caption{Ground truth. }      
    \end{subfigure}
     \caption{Comparison of 5 models on a medical image, where user scribbles are depicted on the probability map. Energy is reported in Sec.~\ref{medical}, and numbers denote the time spent. In LP-PC, $0.62\%$ of the pixels remains unlabeled , colored in white. Both ILP-PC and ILP-PCB have $2.8\%$ ILP gap.}
     \label{Aorta_MRI}
\end{figure}

\begin{table}[h]
\centering
\begin{tabu} { | X[c] | X[c] | X[c]   | X[c] | X[c] |}
 \hline
 $L_0$-$H$ & ILP-PC & ILP-PCB  & LP-PC & ILP-P \\
 \hline
 864.5  & 854 & 854 & 829.5 & 826.8 \\
\hline
\end{tabu}
\caption{Energies of $5$ proposed models on an MRI image.}
\label{table:energy}
\end{table}
We report a medical image segmentation example, where unary potentials are based on the probability maps of given labels, which were trained using convolutional neural networks (CNN) \cite{Dolz2017a}. The purpose here is to obtain a binary (two-region) segmentation of a magnetic resonance image (MRI), which depicts the abdominal aorta~\cite{Dolz2017}. In this example, the CNN probability maps yielded unsatisfying disconnected region due to imaging noise, the lack of boundary contrast and limited training information.

The input image is of size $111\times 111$, and the computation time is reported in Fig.~\ref{Aorta_MRI}, where ILP-PC and ILP-PCB both failed to converge. $L_0$-$H$ result is of high quality, within $1.22\%$ of the best solution found by ILP-PC in $100$ seconds.
The energy of all models is reported in table~\ref{table:energy}.

As we see in Fig.~\ref{Aorta_MRI}, LP-PC has $0,62\%$ fractional solution (depicted in white). Although a post-processing rounding heuristic can be applied, it is not guaranteed even to find a feasible solution. ILP-P gives two separated regions, which is far away from the ground truth.
We notice that LP-PC and ILP-P give lower energy than ILP-PC. This is because both of them are relaxations for ILP-PC and, therefore, provide lower bounds. The inclusion of background label is not beneficial in this example.

\subsubsection{Superpixels of BSDS500}
In this section, we introduce another model ILP-PCW, which is ILP-PC without the initial solution of $L_0$-$H$. The purpose is to test whether the ILP solver is able to achieve good results by itself.
\begin{figure}[t!]
    \centering
    \begin{subfigure}{0.36\columnwidth}
                    \includegraphics[width=.99\columnwidth]{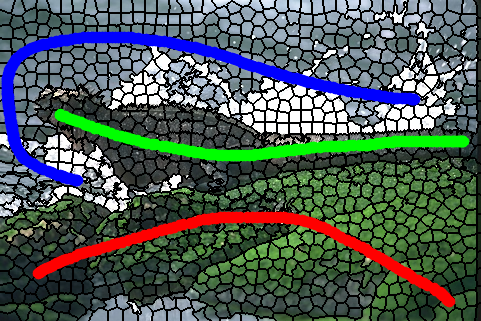}
                \caption{Input with $3$ user scribbles.}
    \end{subfigure}
    \begin{subfigure}{0.36\columnwidth}
            \includegraphics[width=.99\columnwidth]{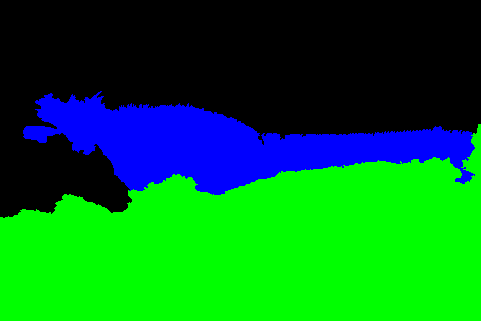}
            \caption{$L_0$-$H$, 0.04s, E=16088.}      
    \end{subfigure}\\        
    \begin{subfigure}{0.36\columnwidth}
            \includegraphics[width=.99\columnwidth]{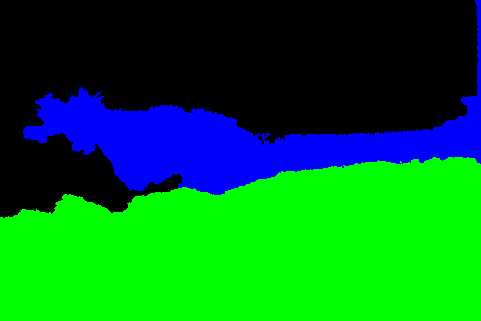}
            \caption{ILP-PC, 100s, E=15804.9.}
    \end{subfigure}        
    \begin{subfigure}{0.36\columnwidth}
            \includegraphics[width=.99\columnwidth]{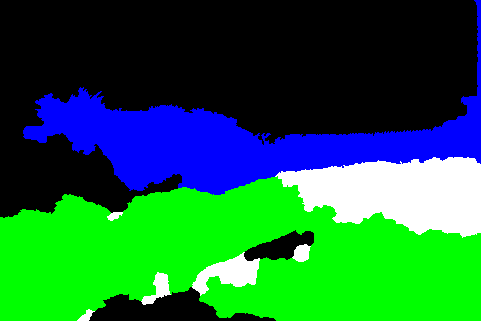}
            \caption{LP-PC, 0.27s, E=15560.9.}
    \end{subfigure}\\
    \begin{subfigure}{0.36\columnwidth}%
            \includegraphics[width=.99\columnwidth]{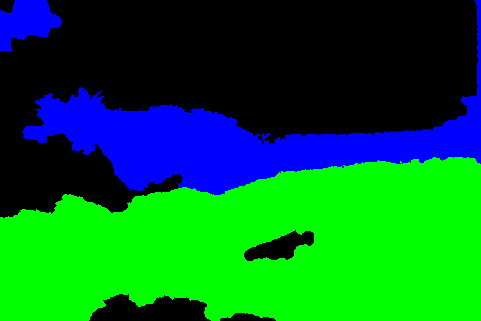}
            \caption{ILP-P, 0.08s, E=15232.5.}
    \end{subfigure}%
    \begin{subfigure}{0.36\columnwidth}
            \includegraphics[width=.99\columnwidth]{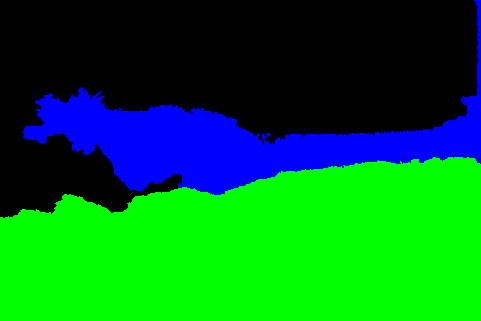}
            \caption{ILP-PCW, 61s, E=15804.9.}
    \end{subfigure}
     \caption{Comparison of $5$ models on BSDS, including ILP-PCW ( ILP-PC without initial solution from $L_0$-$H$). Note that $5.9\%$ of the nodes remains unlabeled in LP-PC, colored in white. The $L_0$-$H$ solution is within $1.76\%$ of best solution.}
     \label{fish}
\end{figure}

Fig.~\ref{fish} depicts an example, where ILP-PC with $L_0$-$H$ does not converge within the time limit while ILP-PCW finds the provably global optimal solution. 
The solution time and the energy are reported in the figure. The energy of the starting solution provided by $L_0$-$H$ is very good, within $1.76\%$ of the optimal 
solution found by ILP-PCW in $61$ seconds. Note that ILP-PC and ILP-PCW give the same energy, meaning they found the same solution, but ILP-PC failed to get the tightest
lower bound, having an ILP gap of $0.3\%$.
A closer look into the log file of Cplex shows that, given the good initial solution of $L_0$-$H$, ILP-PC found this best solution in less than $1$ sec, while ILP-PCW 
takes $18$ secs.

The inclusion of the integrality constraints~\eqref{ILP1dseed} and the connectivity priors greatly improve solution quality. As many as $5.9\%$ superpixel values of LP-PC are factional. In ILP-P, the green and black labels have several disconnected regions, resulting in a worse solution.


\subsubsection{More experiments on BSDS500 images}
More experiments on BSDS500 images are shown in Fig.~\ref{large}. In the first column, the pairwise term $\lambda$ is set to $0.1$ to encourage thin branches of the tree, while all other parameters remain at their default values ($0.2$). We draw much fewer brushes in the right two columns, to show the robustness of our model (to be discussed in Sec.~\ref{brush} with more details). We observe that $L_0$-$H$ gives good results in the right two images, while not being satisfying in the left two cases. Our proposed model ILP-PC achieves the best overall results.

\subsection{Quantitative comparison of different models}
In this section, we give a detailed analysis of the $5$ different models with respect to energy, computational time and parameters. They are based on computational experiments of $15$ images from BSDS500 and medical images from~\cite{Dolz2017}.

\subsubsection{Statistics of $5$ models.}
We report the average running time of all models in the second row of Table~\ref{table:ILP_time}, where the time limit is $100$ sec. The average ILP optimality gap is shown in the third row, where ``Null'' means no ILP gap exists (because they are not an ILP). 
We can see that the ILP-PC on average takes $62.3$ seconds, and the ILP gap is $3.7\%$. Moreover, the inclusion of the ``background label'' (ILP-PCB) helps in term of both speed and ILP gap.  There exist two instances where ILP-PCB reduces the time of $100$ (time limit reached) secs from ILP-PC to less than one sec. This gain results from ``relaxing'' one label to be non-connected.
It is also surprising to see that ILP-P with only pairwise prior is also very efficient.

\begin{table}[t!]
\centering
\begin{tabu} {| X[c]| X[c] | X[c] | X[c] | X[c] | X[c] |}
 \hline
 &$L_0$-$H$   & ILP-PC &ILP-PCB        & LP-PC  &ILP-P  \\
 \hline
Time &0.7  & 62.3 & 39.2  & 1.4 & 0.3  \\
 \hline
Gap &Null  & 3.7\%   & 1.9\% &  Null  & 0  \\
\hline
\end{tabu}
\caption{Time and optimality gap of 5 proposed models.}
\label{table:ILP_time}
\end{table}

Apart from the above statistics, we also report that among all tested images an average of $3.5\%$ pixels found by LP-PC remain unlabeled (fractional solutions). Hence, we argue that LP-PC is not applicable in practice.

\subsubsection{ILP-PC against $L_0$-$H$}
In the conducted $15$ experiments, $L_0$-$H$ is found to be fast and it provides an initial solution to the ILP solver. ILP-PC adopts this initial solution, and seeks for better ones as the branch-and-cut tree proceeds. On average, the ILP-PC is able to improve $6.4\%$ quality of the initial solution (provided by $L_0$-$H$) within the time limit. Moreover, it could provide any feasible solution a lower bound (thus a sub-optimality guarantee) upon solving the LP-relaxation of the ILP.

\subsubsection{ILP-PC with $2$ secs time limit}
We further conduct experiments on the same $15$ instances with a time limit of ILP-PC set to $2$ secs. $L_0$-$H$ is again applied as pre-processing for the solver. We observe that ILP could improved $12$ out of the $15$ instances, and increase on average $4.4\%$ quality of initial solution in just $2$ secs. 
Hence, we argue that our proposed ILP model can be beneficial even within very short time, and thus applicable in much wider scenarios.

\begin{figure}[t!]
    \centering
    \begin{subfigure}{0.35\columnwidth}
                    \includegraphics[width=.99\columnwidth]{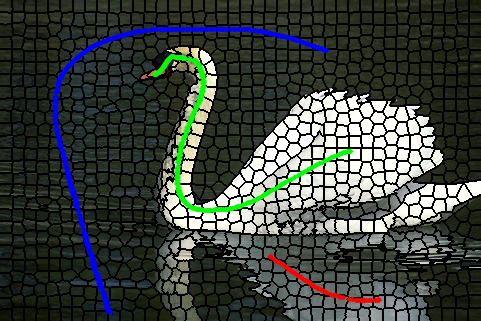}
    \end{subfigure}%
    \begin{subfigure}{0.35\columnwidth}
            \includegraphics[width=.99\columnwidth]{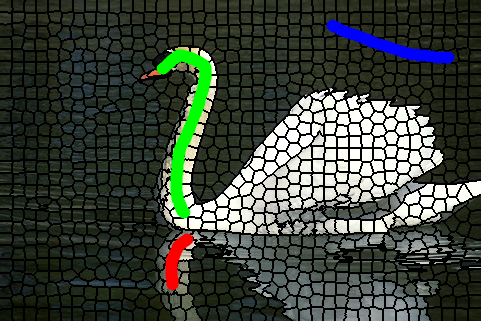}
    \end{subfigure}\\        
    \begin{subfigure}{0.35\columnwidth}%
            \includegraphics[width=.99\columnwidth]{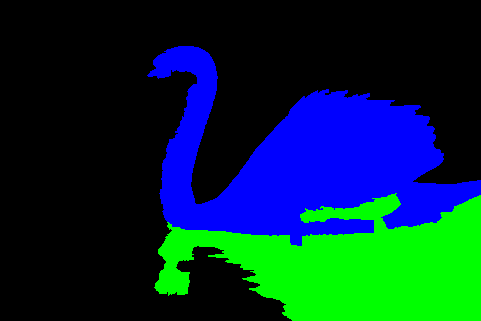}
    \end{subfigure}%
    \begin{subfigure}{0.35\columnwidth}
            \includegraphics[width=.99\columnwidth]{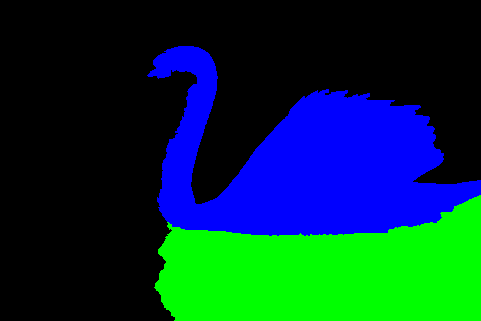}
    \end{subfigure}\\
    \begin{subfigure}{0.35\columnwidth}%
            \includegraphics[width=.99\columnwidth]{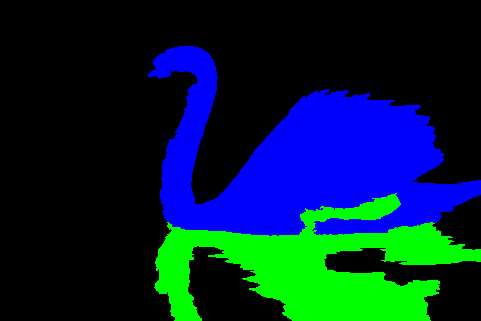}
    \end{subfigure}%
    \begin{subfigure}{0.35\columnwidth}
            \includegraphics[width=.99\columnwidth]{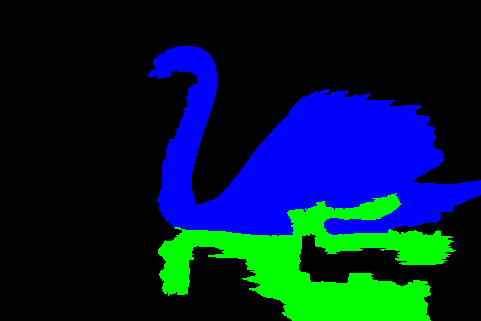}
    \end{subfigure}\\
     \caption{ Different user scribbles on the same image. Second row: ILP-PC with E=9544.2 and 13422.8, both reaching 100 seconds time limit. Bottom row: ILP-PCB with E=8615.3 and 10482.3, t= 2.1s and 0.4s. Note that the background label (shown in black) can be disconnected, while the other labels (blue and green) are connected.}
     \label{swan}
\end{figure}

\subsection{Analysis of different user scribbles}
\label{brush}
The user scribbles are used to learn the average color of each label, which is used in the ILP as the unary term. They also enforce hard constraints~\eqref{ILP1dseed2} into the ILP~\eqref{ILP1d}, which help fixing some of the binary variables, thus pruning the branch-and-bound search trees within the ILP solver. Moreover, in case of difficult situations, scribbles can also be used to exclude outliers from one label, such as in Fig.\ref{Aorta_MRI}. We show in Fig.~\ref{swan} that changing the scribbles does not alter significantly the results. While ILP-PC reaches the time limit in both cases, ILP-PCB gets the reported optimal solution in only $2.1$ and $0.4$ secs. The energy differences between the two cases are due to two factors: different scribbles resulted not only into different hard constraints, but also different unary potentials.

\begin{figure*}[t!]
 \center
 \begin{subfigure}{0.99\textwidth}
            \includegraphics[width=.99\columnwidth]{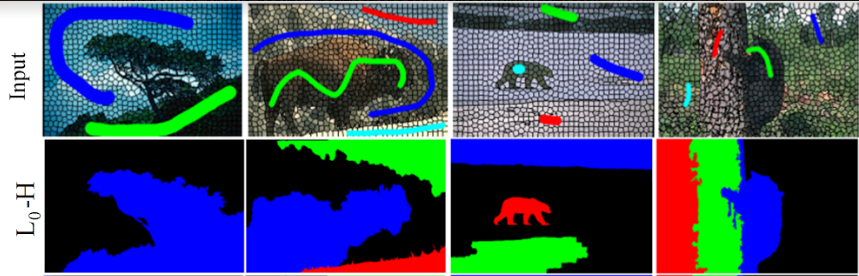}
    \end{subfigure}\\
    \begin{subfigure}{0.99\textwidth}
            \includegraphics[width=.99\columnwidth]{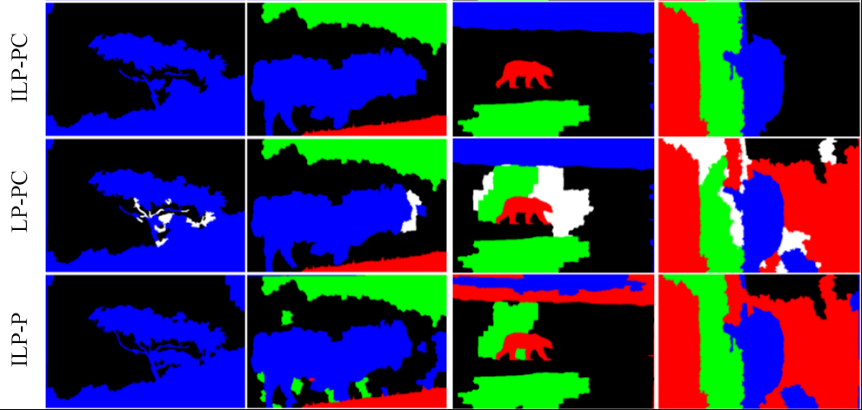}
    \end{subfigure}
 \caption{More experiments on BSDS500 images. The pairwise term $\lambda$ is set to $0.1$ to encourage thin branches of the tree in the first column. 
 The user scribbles in the right two columns are fewer, to show the robustness of our model. 
 The white pixels in LP-PC denote fractional solutions, and ILP-P is without connectivity constraints, thus allowing disconnected regions with the same label. 
 }
 \label{large}
\end{figure*}

\section{Conclusion}
Recent years' algorithmic advances in Integer Programming plus the hardware improvements have resulted in a enormous speedup in solving ILPs. 
We revisit the ILP of the multi-label MRF with connectedness priors, and propose an exact branch-and-cut approach that enforce the connectivity constraints on the fly through cutting plane generation. A fast region fusion based heuristic is designed to provide a good initial solution.
The solver provides a nearly-optimal solution with a guarantee on the sub-optimality even if we terminate it earlier.

The ILP model can be applied to generate ground-truth labeling and segmentation offline, thus providing a quality assessment for any fast algorithm.

It can also be applied as a post-processing method on top of any existing multi-label segmentation approach. Hence, the advantage of ILP is two-fold. On the one hand, it provides a guarantee (lower bound) for any given initial solution. On the other hand, it seeks for better solutions during its search in the branch-and-bound tree, and it is beneficial even within very short time.

In this paper, we demonstrated the power and usefulness of our model by some experiments on the PASCAL, BSDS500 dataset, and medical images with trained probability maps. We have shown that with moderate-sized images or superpixels of large ones, our model achieves the best overall performance, yielding a provably global optimum in some instances.

\bibliographystyle{splncs}
\bibliography{research}

\section{Appendix}
\subsection{Proof for Theorem $1$}

\begin{proof}
The problem $C0$ of enforcing connectivity on one label in~\cite{graphcutbase}  is proved to be  $\mathcal{NP}$-hard. Suppose by fixing a root node, this problem becomes polynomial solvable. Then, we can randomly assign a node $v\in V$ to be the root node, and solve the resulting problem in polynomial time. Since there are $n$ ($n=\lvert V\rvert$) possible root nodes, by trying out all $n$ possible root nodes (still in polynomial time), we are sure to find the optimal solution out of $n$ optimization problems. Thus a contradiction.
\end{proof}

\end{document}